\newtheorem{theorem}{Theorem}[section]
\newtheorem{definition}[theorem]{Definition}
\theoremstyle{remark}
\theoremstyle{remark}
\numberwithin{equation}{section}
\begin{document}
\title{Topological understanding of Neural Networks}
\author{Tushar Pandey }
\email{tusharp@tamu.edu}
\affiliation{Department of Mathematics, Texas A\&M University, College Station, TX 77840}

\begin{abstract}
    In this review paper, we look at the internal structure of neural networks which is usually treated as a black box. The easiest and most comprehensible thing to do is to look at a binary classification and try to understand the approach a neural network takes. We review the significance of different activation functions, types of network architectures associated to them, and some empirical data. At the end, we conclude with describing some possible choices of activation functions for different problems and techniques. 
\end{abstract}
\maketitle
\section{Introduction}
One of the prominent questions in deep learning is understanding what happens inside the black box, i.e. the hidden layers. The theme of the paper is to understand what happens to the data when it goes through different layer. There are some other approaches taken, one where each data point is looked at after every layer, usually images. The second one is to understand the boundary manifold and how that changes in the hidden layer. Even though these methods are important, we believe it's more important to look at the transformation of the entire data set as it goes through the hidden layers, and see the representation of the data space in the final layer.
\\
We begin the paper by looking at some smooth activation functions, where width plays an important role. We provide some intuition behind selecting neural networks with different architectures. We point out some possible errors in considering such methods and the time complexity that comes with it. 
\\
In the third section, we look at a comparison between smooth and non-smooth activation functions, along with changing width and depth of the network. We try to answer a widely asked question, what makes ReLU better than other activation functions in practice? \cite{NH}, \cite{MHN}, \cite{GBB} We begin with a simulated dataset, where the topology is known, in order to understand the changes that could take place in an actual manifold. Under different conditions, these changes are measured. Once there is an idea of the change, it is verified on real world data in high dimensions. Due to computational power constraints, some parameters of the architecture are not adjusted from the simulated data to real world data with a large difference in dimensions.
\\
After reviewing these methods, we draw some conclusions from both the approaches and provide with experiments to extend the results to different architectures in order to improve the understanding of the black box.
\\
For most of the paper, we will consider the case where the data has two classes. All the definitions are provided at the end of the paper in Appendix.

\section{Smooth Activation functions and change in topology}
In this section, we look at the change in topology when the activation function is smooth. This section is based on the work of \cite{Olah} 
\\
Consider an architecture where there is no hidden layer. Let's assume the data forms two lines as described in fig \ref{fig:No hidden Layer}. 
\begin{figure}[h]
    \centering
    \includegraphics[width = 0.2\textwidth]{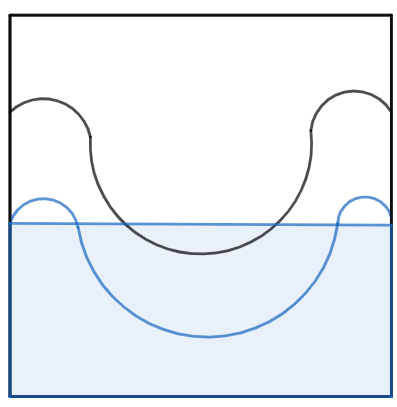}
    \caption{ No hidden Layer}
    \label{fig:No hidden Layer}
\end{figure}

Since there is no hidden layer, the output layer is a linear function, therefore the neural network tries to classify the data by separating it through a straight line (or a hyperplane in case of higher dimensions). In this example, it can not properly classify the data set as no straight line can possibly distinguish the two classes completely. Note that there is no activation function involved so far.
\\
Now consider the case with one hidden layer. We look at the $tanh$ function and see the boundary line between the two classes of data in fig \ref{fig:one-hidden-layer}.
\begin{figure}[h]
    \centering
    \includegraphics[width = 0.2\textwidth]{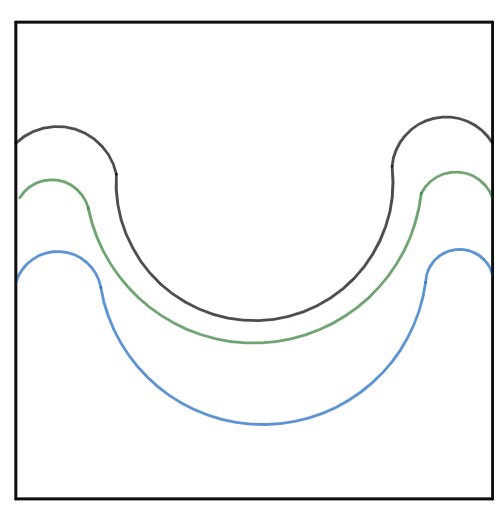}
    \caption{  "tanh" activation function with one hidden layer. The green line is the boundary}
    \label{fig:one-hidden-layer}
\end{figure}
\\
This separation is seen in the actual data space, by which we mean the original way the data is represented. However, internally, the network doesn't try to change the boundary shape from a line to a curve, rather, it changes the shape of the data and then fits a linear separating boundary. More precisely, the ambient space of the data changes after applying the activation functions and therefore it's easier for the neural network to construct a linear boundary. An example of the same is demonstrated in fig \ref{fig:hidden_layer_linear_separation}.
\begin{figure}[h]
    \centering
    \includegraphics[width = 0.2\textwidth]{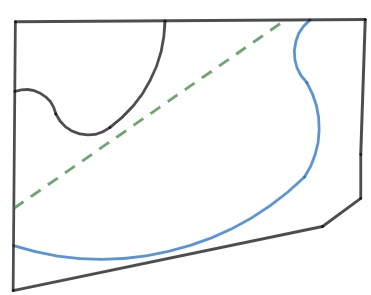}
    \caption{The data space is transformed and the final boundary (in green) is linear.}
    \label{fig:hidden_layer_linear_separation}
\end{figure}
\\ 
For a smooth activation functions, this change in the data space is a homeomorphism. 
A natural question that arises:
\\
\textit{"Is one hidden layer enough to change the data space (manifold) in order to separate two classes via a linear boundary for smooth activation functions?"}
\\
Answer: No. In fact, depth is not sufficient to decide whether or not two data sets are separable. The depth of the network corresponds to the number of transformations. Since the functions are homeomorphisms, they can't change some topological properties of the space. One such property of a space is the homology (or the Betti numbers). 
Back to neural networks; if one has an annulus as one class of data points and in the hole, there is a cluster corresponding to a different class, no homoemorphism in $\mathbb R^2$ can distinguish between these classes for a smooth activation neural network. Since it's a homeomorphism, $\beta_1$ will remain 1, which means, no linear boundary can separate the two classes completely.

Question: \textit{How about changing the width?}

Answer: yes! It will work. Width corresponds to an embedding in a higher dimensional space, where one can lift up the class inside the hole and by a hyperplane, separate them. This means, if we know the minimum dimension of the ambient space where the data can be embedded properly, we can use a neural network to completely classify the dataset.
\begin{theorem}
For the data space described above, any neural network with a smooth activation function, width 1 or 2 is not enough to completely classify the two data classes regardless of the depth.
\end{theorem}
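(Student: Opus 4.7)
The plan is to write the network's pre-threshold scalar output as $s(x) = w \cdot h(x) + b_{0}$, where $h : \mathbb{R}^{2} \to \mathbb{R}^{d}$ is the composition of all hidden layers (so $d \le 2$) and $(w, b_{0})$ is the final linear classifier. For the network to separate $A$ (the annular class) from $C$ (the inner cluster), the images $s(A), s(C) \subset \mathbb{R}$ must be disjoint; since $A$ and $C$ are connected and $s$ is continuous, each is an interval. The goal is therefore to produce, for every architecture of width at most $2$, a common value in $s(A) \cap s(C)$. Both width cases I would base on a loop $\gamma \subset A$ that has winding number $\pm 1$ around any chosen $c \in C$; such $\gamma$ exists because $A$ is an annulus enclosing $C$.

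For the width-$2$ case with every weight matrix invertible, the composition $h$ is a homeomorphism of $\mathbb{R}^{2}$ onto an open set $U \subset \mathbb{R}^{2}$: each hidden layer composes an invertible linear map with the componentwise smooth strictly monotone activation, which (for $\tanh$-type choices) is itself a homeomorphism onto its range. Suppose for contradiction that $s$ separates $A$ from $C$, so that $h(\gamma)$ lies in the open half-plane $H^{+} = \{z : w \cdot z + b_{0} > 0\}$ while $h(c) \in H^{-}$. Because $H^{+}$ is convex and excludes $h(c)$, the loop $h(\gamma) \subset H^{+}$ is contractible in $\mathbb{R}^{2} \setminus \{h(c)\}$, so its winding number about $h(c)$ is zero. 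On the other hand, $h$ restricts to a homeomorphism $\mathbb{R}^{2} \setminus \{c\} \to U \setminus \{h(c)\}$, which is a $\pi_{1}$-isomorphism; a small round loop about $h(c)$ inside $U$ is a generator of $\pi_{1}(\mathbb{R}^{2} \setminus \{h(c)\}) = \mathbb{Z}$ via inclusion, so $h(\gamma)$ has winding number $\pm 1$ around $h(c)$, a contradiction.

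For the width-$1$ case---more generally, any architecture in which some hidden layer has width $1$ or some weight matrix has rank $1$, producing a one-dimensional bottleneck---the output factors as $s(x) = \Phi(u \cdot h_{0}(x))$, where $h_{0}$ is the homeomorphism computed by the width-$2$ invertible layers preceding the first bottleneck ($h_{0} = \mathrm{id}$ if the bottleneck is at layer $1$), $u \in \mathbb{R}^{2}$ is the bottleneck direction, and $\Phi : \mathbb{R} \to \mathbb{R}$ is continuous. Applying the winding number argument above to $h_{0}$, the affine line $\{z : u \cdot z = u \cdot h_{0}(c)\}$ must intersect $h_{0}(\gamma)$, so $u \cdot h_{0}(c) \in u \cdot h_{0}(A)$; composing with $\Phi$ yields $s(c) \in s(A)$. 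Hence $s(A) \cap s(C) \neq \emptyset$, and no final threshold can separate the two classes.

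The main obstacle I expect is justifying the winding number identity when $U$ is a proper open subset of $\mathbb{R}^{2}$---a generic situation whenever $\tanh$ or sigmoid is used, since $h$ does not then extend to a global homeomorphism of $\mathbb{R}^{2}$. I would resolve this purely topologically: openness of $U$ lets one choose a small round loop about any $p \in U$ contained entirely in $U$, and this loop simultaneously generates $\pi_{1}(U \setminus \{p\})$ and, via the inclusion $U \setminus \{p\} \hookrightarrow \mathbb{R}^{2} \setminus \{p\}$, the ambient $\pi_{1}(\mathbb{R}^{2} \setminus \{p\}) = \mathbb{Z}$. Combined with $h$ being a $\pi_{1}$-isomorphism, this transfers the $\pm 1$ winding number of $\gamma$ around $c$ to the same for $h(\gamma)$ around $h(c)$, closing the argument.
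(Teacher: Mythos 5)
Your proof is correct, and it is substantially more careful than the paper's own, which disposes of the theorem in three lines: each smooth-activation hidden layer is a homeomorphism, homeomorphisms preserve $\beta_1$, and ``in order to divide the space by a linear boundary, $\beta_1$ needs to be $0$.'' The essential difference is which invariant is tracked. The paper tracks $\beta_1$ of the annular class alone, and the quoted step is really the whole content of the theorem, asserted rather than proved --- indeed $\beta_1(M_a)=1$ by itself is no obstruction to linear separability (an annulus lying entirely on one side of a hyperplane, with the other class on the other side, is separable); the true obstruction is the \emph{relative} datum that the inner class sits in the hole. Your winding-number formulation captures exactly this: you track the class of a loop $\gamma\subset A$ in $\pi_1(\mathbb{R}^2\setminus\{c\})$ for $c$ in the inner class, show that a homeomorphism onto an open set $U\subset\mathbb{R}^2$ carries it to a generator of $\pi_1(\mathbb{R}^2\setminus\{h(c)\})$, and observe that confinement to a convex half-plane forces winding number zero. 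Your small-round-loop device is sound and worth spelling out: since $U\cong\mathbb{R}^2$ gives $\pi_1(U\setminus\{p\})\cong\mathbb{Z}$, and the inclusion-induced homomorphism $\mathbb{Z}\to\mathbb{Z}$ sends the small loop's class to a generator of the ambient group, that homomorphism is forced to be an isomorphism and the small loop is forced to be a generator of $\pi_1(U\setminus\{p\})$. You also cover two cases the paper's homeomorphism argument cannot touch --- width-$1$ layers and rank-deficient weight matrices, neither of which is a homeomorphism --- via the scalar-bottleneck factorization $s=\Phi(u\cdot h_0)$ and the observation that the line $u\cdot z=u\cdot h_0(c)$ must meet $h_0(\gamma)$. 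The one caveat, which the paper shares, is that ``smooth'' alone does not make a layer injective; you need the activation to be strictly monotone (as $\tanh$ and the sigmoid are), which you flag parenthetically but should state as a standing hypothesis.
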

\begin{proof}
For a smooth activation function, hidden layer corresponds to a homeomorphism, which means $\beta_1 = 1$ after all the hidden layers. The last layer is a linear transformation, therefore conserves $\beta_1$ as well. In order to divide the space by a linear boundary, $\beta_1$ needs to be 0. Therefore, a contradiction.
\end{proof}
Note that this was true only because of the smoothness of the activation function. Which means, if the activation function is not smooth (e.g. ReLU), the Neural network can form a linear boundary to distinguish the two classes.
For a dataset with n dimensional points, they can be embedded in ambient space of dimension 2n+2, such that a linear (hyperplane) boundary can separate them. Therefore, if the hidden layer has width $\geq 2n+2$, then the data can be separated.
\begin{theorem} \cite{Olah}
There is an ambient isotopy between the input and a network layer’s representation if: a) W isn’t singular, b) we are willing to permute the neurons in the hidden layer, and c) there is more than 1 hidden unit.
\end{theorem}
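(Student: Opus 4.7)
The plan is to write the layer map $f(x) = \sigma(Wx+b)$ as a composition of three elementary pieces---a linear map $x \mapsto Wx$, a translation $x \mapsto x+b$, and the coordinate-wise activation $x \mapsto (\sigma(x_1),\ldots,\sigma(x_n))$---and to produce an ambient isotopy from the identity on $\mathbb{R}^n$ to each of them. Because the concatenation of ambient isotopies (after reparametrizing time) is again an ambient isotopy, their composition will deliver the isotopy the theorem asserts. The three hypotheses (a)--(c) will each be used to unblock exactly one of the three pieces.

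The translation and linear pieces are comparatively routine. For the translation I would use the straight line $H_b(x,t) = x + tb$, each slice of which is a self-homeomorphism of $\mathbb{R}^n$. For the linear map, since $W$ is nonsingular by (a) it lies in one of the two connected components of $GL_n(\mathbb{R})$, distinguished by the sign of the determinant. If $\det W > 0$, any smooth path in $GL_n^+$ from $I$ to $W$ produces an isotopy $H_W(x,t) = W_t x$ through linear self-homeomorphisms. If $\det W < 0$, I would invoke (b) and (c): since there are at least two hidden neurons, transposing a pair of them premultiplies $W$ by a permutation matrix $P$ with $\det P = -1$, and by (b) this permutation can be absorbed into the next layer's weights without altering the function computed by the network. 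The new matrix $PW$ has positive determinant and can be joined to $I$ as above.

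The activation step is where I expect the real obstacle to lie. For a smooth $\sigma$ with $\sigma'>0$ (such as $\tanh$), the natural candidate is the coordinate-wise straight-line homotopy $H_\sigma(x,t)_i = (1-t)x_i + t\sigma(x_i)$; its derivative in $x_i$ equals $(1-t) + t\sigma'(x_i) > 0$, so each slice is a homeomorphism onto its image. The delicate point is that when $\sigma$ is bounded the final slice $H_\sigma(\cdot,1)$ lands in the bounded box $(\inf\sigma,\sup\sigma)^n$ and therefore fails to be a self-homeomorphism of the whole ambient space. I would address this either by postcomposing with a fixed self-homeomorphism of $\mathbb{R}^n$ that stretches the box back out (and is itself isotopic to the identity), or, following the informal reading in \cite{Olah}, by interpreting the isotopy as one of embeddings of the bounded data set---in which case the image stays in $\mathbb{R}^n$ for every $t$ and the straight-line interpolation suffices. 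Concatenating the three isotopies in the order dictated by $f = \sigma \circ (\,\cdot + b) \circ W$ then yields the claim.
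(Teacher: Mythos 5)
The paper itself offers no proof of this theorem; it is stated verbatim from \cite{Olah}, and the argument you give is essentially the one sketched in that source: factor the layer map as $\sigma \circ (\,\cdot + b) \circ W$, connect each factor to the identity (a path in $GL_n^+(\mathbb{R})$ for $W$, using (b) and (c) to fix a negative determinant; the straight line for the translation; coordinate-wise interpolation for $\sigma$), and concatenate. So in substance you have reproduced the intended proof, and your treatment is actually more careful than the source on the activation step. One caveat on your first proposed fix for the bounded-range issue: postcomposing with a \emph{fixed} self-homeomorphism of $\mathbb{R}^n$ cannot repair the final slice, because a self-homeomorphism of $\mathbb{R}^n$ carries the proper open box $(\inf\sigma,\sup\sigma)^n$ onto another proper open subset, never onto all of $\mathbb{R}^n$; so that route does not literally produce a family of self-homeomorphisms ending at $\sigma$. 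Your second fix is the right one: since the data set is bounded, either replace $\sigma$ by a surjective self-homeomorphism of $\mathbb{R}$ agreeing with $\sigma$ on a large box (the straight-line interpolation to that map does consist of self-homeomorphisms), or view the straight-line interpolation as an isotopy of embeddings of the compact data set and invoke the isotopy extension theorem to promote it to an ambient isotopy. With that clarification your argument is complete and matches the cited proof.
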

So far, we have a way to find a nice boundary to potentially get $100 \%$ accuracy on the training data. This however, does not mean it's the actual separation. The author \cite{Olah} also suggests that based on some empirical results, changing the last layer from softmax to KNN, the accuracy increases. This approach concludes that for smooth activation, the width needs to be wide enough and there is a need of sufficient (but small) depth in order to not force the network to get stuck at a local minima. 
\section{Smooth vs non-smooth activation functions}
We follow \cite{NZL} for most of the section. Once again, the data is divided into two classes $M = M_a \cup M_b$. The first conclusion of this section describes ReLU outperforming smooth activation function, namely tanh, and the second one is describing the depth instead of the fact that shallow networks can approximate most functions pretty well. \cite{NZL} have performed the analysis on real and simulated data to conclude such observations. The topological work is done through TDA, based on the idea of persistent homology originally inspired by \cite{C1,C2,CZ}
\\
The interesting aspect of this study is that instead of looking at the change in each data point, the overall change in the data shape is observed, which provides a better insight. The focus of the study is to look at the change in Betti numbers and how it is affected in the training.
\begin{figure}[h]
    \centering
    \includegraphics[width = 0.95\textwidth]{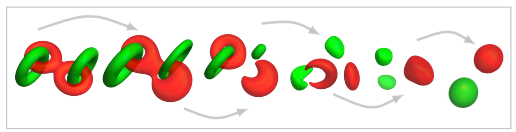}
    \caption{\cite{NZL} Change in Betti numbers:
    \\
    $\beta(red): (1, 2, 0) \rightarrow (1, 2, 0) \rightarrow (2, 1, 0) \rightarrow (2, 0, 0) \rightarrow (1, 0, 0) \rightarrow (1, 0, 0);
    \\
    \beta(green): : (2, 2, 0) \rightarrow (2, 2, 0) \rightarrow (2, 1, 0) \rightarrow (2, 0, 0) \rightarrow (2, 0, 0) \rightarrow (1, 0, 0).$ }
    \label{fig:progression_betti}
\end{figure}
\\
In fig \ref{fig:progression_betti}, the actual network changes the shape of the data like demonstrated. In order to unlink the components, it has to break the topology, and change the betti numbers. Generally speaking, for binary classification problems, the idea is to decrease the betti numbers, s.t. $\beta_1(M_a) = 0 = \beta_1(M_b)$ and $\beta_2(M_a) = 0 = \beta_2(M_b)$, whereas $\beta_0(M_a) = 1 = \beta_0(M_b)$. Note that the last condition is not very strict. Even if $\beta_0(M_a)>1$ while other $\beta_j(M_a) = 0$ for $j\geq 1$, it's still a good enough classification. In fact, if $\beta_0 > 1$, the suggests possibility of another class (or subclass), providing more insights about the data.
\\
Out of some of the obstructions this hypothesis possesses, one of them is that the data usually doesn't come in such nice form. The data will more likely be in a point cloud form with some noise. But, that's where persistent homology comes into picture. Presence of small noise does not change the effective $\beta_j$ for the point cloud data. If the data does come from some sampling of a manifold structure, then persistent homology recovers the $\beta_j$ almost precisely. The authors of the paper have not only looked at the topological changes for simulated data, but also for real world data including images. 
\\
Some important questions which will be answered here are:
\\
1) Why does ReLU perform better than others empirically? 
\\
2) Are these topological changes observed through this method robust?
\\
3) Why do deep neural network work better than shallow ones even after the approximation theorems?
\\
\subsection{Topology}
We try to look into topological complexity of the manifold and the generalization gap of the dataset which measures the difference in test and training accuracy of the model. The Generalization gap is defined as 
\begin{align*}
    GG(X_{train}, X_{test},m) = Acc(X_{train},m) - Acc(X_{test},m)
\end{align*}
where m is the model.
\\
For piece wise linear function the upper bound for topological complexity is given in terms of linear regions. However, the number of linear regions determined through the training set is not stable under small perturbations \cite{ZNL}. 
Instead of looking at the decision boundary at different stages, the data space transformations are the object of interest here. In order to understand the transformations, one can look at the change in Betti numbers. Furthermore, instead of looking at the Betti number of the entire Manifold, it's sufficient to look at the Betti number progression of each component. In practice, it's usually difficult to compute the homology for a point cloud. 

The standard practices in Topological Data Analysis (TDA) is i) Discard outliers, noise ii) Construct $\epsilon-$Vietoris-Rips complex iii) Simplify VR Complex without changing the topology. The topological structure can be altered by i) and ii) based on choices of noise reduction/smoothing and $\epsilon$ value in VR complex. 
\subsection{Setup}
\begin{enumerate}
    \item The problem assumed is a binary classification problem, $M = M_a \cup M_b$ with an additional assumption, $\inf \{ ||x-y||: x \in M_a, y \in M_b \} >0 $.
    \item  For the simulated dataset, we know the manifold, so to generate the point cloud a large sample is selected uniformly and densely. The neural network chosen in feed forward, with depth l.
    \item The network function is $v: s \circ f_l \circ  \dots \circ f_1 :  \mathbb R \rightarrow [0,1] $.s is the score function, $s: \mathbb R^{n_{l+1}} \rightarrow [0,1] $. 
    \item Let $n_j $ denote the width of the layer j. We let $n_1 = d, n_{l+1} = p$ and $v_j : f_j \circ \dots \circ f_1$.
    \item  The simulated data is non-realistic with complicated topology in low-dimension. Real world data is in high dimensions but most likely simpler in terms of entangled classes. For the simulated data we look at large $\beta_0$, non-zero $\beta_j$ for $j \geq 1$ and large Topological complexity.
    \item  The model is trained to near-zero generalization error. 
\end{enumerate}
 Spoiler alert: with sufficient depth, the last layer maps $v(T_a), v(T_b)$ on the opposite ends of $[0,1]$.
There are some challenges moving from simulated to real data. The topology is not known so the persistent homology is harder to compute at each layer. 
\\
The work is done on three different dataset which can be seen in the following figure \cite{NZL}. The red part of the data is $M_b$ and the green part is $M_a$.
\begin{figure}[h]
    \centering
    \includegraphics[width = 0.75\textwidth]{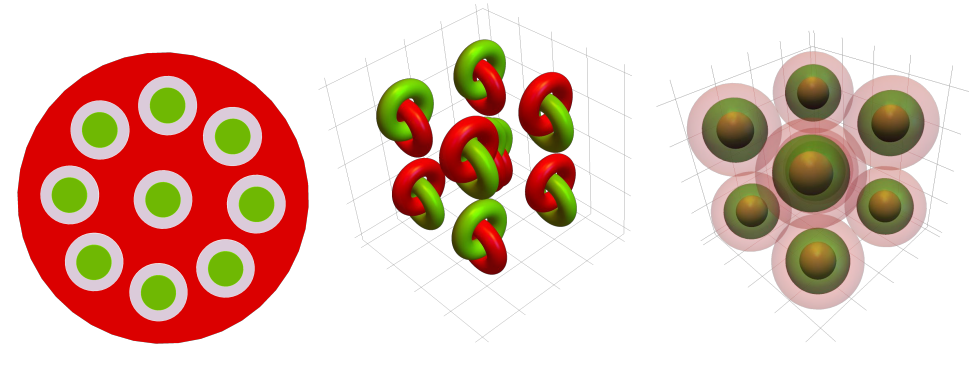}
    \caption{ \cite{NZL} a) $\beta(M_a) = (9,0)$, $\beta(M_b) = (1,9)$, b)  $\beta(M_a) = (9,9,0)$, $\beta(M_b) = (9,9,0)$,\\ c)  $\beta(M_a) = (9,0,9)$, $\beta(M_b) = (18,0,9)$}
    \label{fig:three_dataset}
\end{figure}

\subsection{Training}
\begin{enumerate}
    \item Different activation functions: tanh, ReLU, leaky ReLU
    \item Different depth from 4-10
    \item Different width between 6-50
    \item Criss entropy categorical loss
    \item ADAM with 18000 epoch
    \item $\eta = 0.02-0.04$ with exponential decay $\eta^{t/d}$, $d = 2500$ and t is the epoch.
    \item For bottleneck architecture (narrow width in middle), d = 4000, $\eta = 0.5$
    \item Score = softmax function
    \item Metric $\delta_k$ for $VR_{\epsilon}$ Complex is the graph geodesic distance on K-nearest neighbours. $\delta_k(x_i,x_j) = $ minimum no. of edges between them in knn graph. It preserves connectivity and normalizes distance.
    \item Two hyperparameters, $k$ and $\epsilon$. The persistant homology is used through filtered complex w.r.t. $\delta_k$ at $\epsilon$. Find $k^*$ with $\epsilon=1$ first by $\beta_0(VR_{k^*}) = \beta_0(M)$. Then find $\epsilon^*$ by equating $\beta_1$ and $\beta_2$.
\end{enumerate}
\subsection{Results}
\begin{enumerate}
    \item Clear decay in $\beta_0$ across all possible neural network architectures. It's slower in tanh, faster in Leaky ReLU and fastest in ReLU.
    \item Width: 
    \begin{itemize}
        \item Narrow: 6 neurons each layer, changes topology faster
        \item Bottleneck: one of the middle layer has 3 neurons while other have 15, sudden change in topology at bottleneck.
        \item Wider: 50 neurons each, smoother reduction in topological complexity.
    \end{itemize}
    \item depth:
    \begin{itemize}
        \item For highly entangled classes, more depth required for $100\%$ accuracy. Low depth makes it difficult to get the accuracy we want
        \item Faster reduction in Topological complexity in the final layers.
        \item Sometimes, there's non-required simplification for deeper neural networks.
    \end{itemize}
    \item For wide enough layers, $\beta_1(M) = 0$ and $\beta_2(M) = 0$ is not required. Shallow networks sometimes forces this but deeper ones don't, therefore preserving more structure.
\end{enumerate}
\subsection{Graphical Results}
\begin{figure}[h]
    \centering
    \includegraphics[width = 1\textwidth]{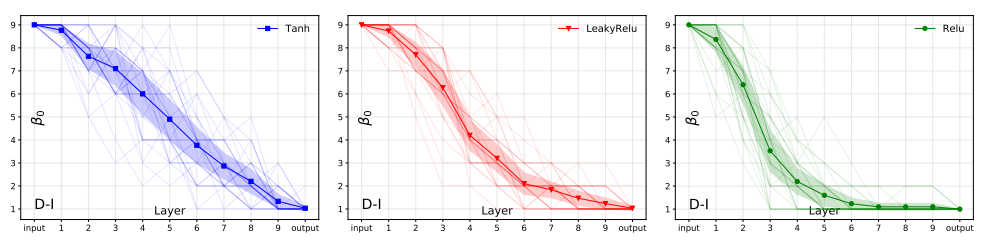}
    \caption{\cite{NZL}  Average change in $\beta_0$ for different activation functions for Dataset I. The dark line is the average whereas the shaded regions are results of multiple simulations.}
    \label{DI}
\end{figure}
\begin{figure}[h]
    \centering
    \includegraphics[width = 0.95\textwidth]{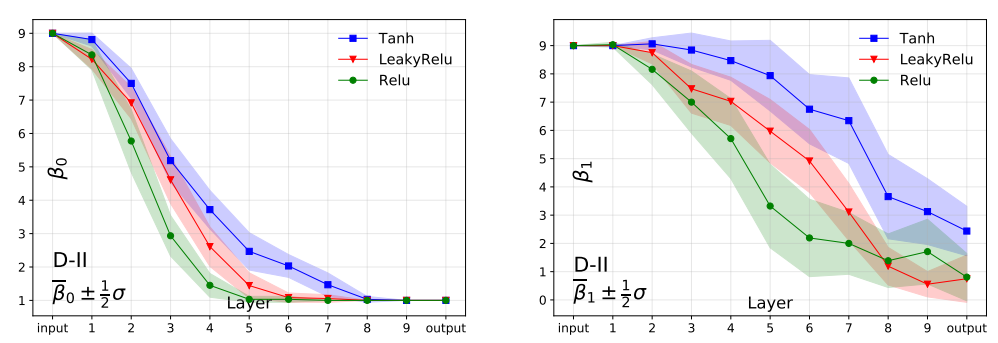}
    \caption{\cite{NZL}  Change in $\beta_0$ and $\beta_1$ for Dataset II for different activation functions.}
    \label{DII}
\end{figure}
\begin{figure}[h]
    \centering
    \includegraphics[width = 0.9\textwidth]{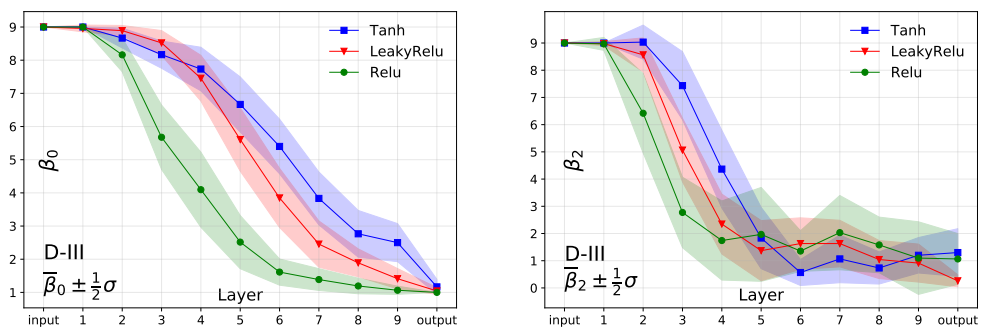}
    \caption{ \cite{NZL} Change in $\beta_0$ and $\beta_2$ for Dataset III for different activation functions.}
    \label{DIII}
\end{figure}
\begin{figure}[h]
    \centering
    \includegraphics[width = 0.9\textwidth]{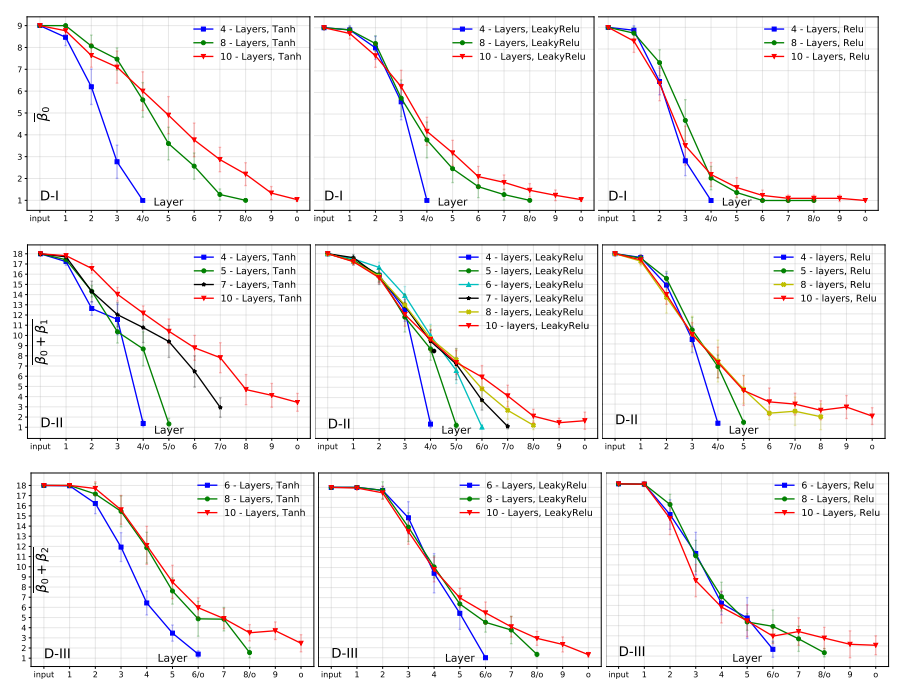}
    \caption{ \cite{NZL} Topological complexity for varying network depth for all three datasets.}
    \label{Varying depth}
\end{figure}
\subsection{Real Dataset}
These properties were verified for MNIST, HTRU2, UCI Banknotes and UCI sensorless drive datasets. Since the dimension of these datasets are high, the persistent homology can not be performed as efficiently and needs to be performed on every layer. The generalization gap is relaxed, from $2-5 \%$. The width and depth of the networks are fixed.
\\
MNIST: $\mathbb R^{784}$, for this dataset, the topological observations are made on top 50 principal components, 
HTRU2 Dataset: $\mathbb R^8$, 
UCI Banknotes: $\mathbb R^4$, 
UCI drive: $\mathbb R^{49}$
\\
The table of results are available in the appendix. The observation as expected were: 
\begin{enumerate}
    \item The topological complexity is reduced overall. The network tries to reduce $\beta_0$ to 1 and other $\beta_j$ to 0. 
    \item Smooth activation function reduces the topological complexity slower than non-smooth ones with ReLU performing the most simplification.
    \item ReLU adds a folding to the data space but not entirely. This is immediate from the definition of ReLU or the absolute value function. 
    \item More layers makes it easier to train the model by "taking it's time simplifying the space step by step".
\end{enumerate}
\section{Future work}
While the possibility of research in this direction is never ending, some possible options for near term research experiments are: 
\\
\subsection{Different activation functions in different layers}
For training a model on multiple data sets or for training a dataset with composite architectures, setting the activation function for the initial layers as a smooth one, and applying ReLU or Leaky ReLU at the end could better transform the data. The idea behind this is that smooth activation function preserves the structure in the same dimension, but embedding it in higher dimension would reduce some complexity with some structures still being preserved. 
\subsection{Relationship between time complexity and robustness against noise}
There are some results about different linear folding because of ReLU which are not stable under noise. There seems to be a trade-off between oversimplification and time. It'll be interesting to relate the stability of folding under a network architecture with non-ReLU activation functions in some layers as well. 
\\
\subsection{Different neural network architectures}
Different architectures for the same dataset, e.g. CNN, simple feed forward network and ResNet should be compared in terms of changes in topological complexity.

\section{References}

\bibliography{main.bib}

\appendix
\section{Definitions}
The definitions listed here are more intuitive than formal for the sake of understanding for a wider audience.
\begin{definition}
\textbf{Homeomorphism}: A function $f: M \rightarrow N$ is considered to be a homeomorphism if f is continuous, f is a bijection (one-one and onto) and $f^{-1}$ is continuous.  
\end{definition}
\begin{definition}
\textbf{Connected Component}: A space M is called connected if it's not a disjoint union of two components. The number of connected component is the cardinality of the components that are connected. 
\end{definition}
\begin{definition}
\textbf{Homology}: Let M be a manifold (locally homeomorphic to $\mathbb R^n$ for some $n \in \mathbb N$)
\\
The $0^{th}$ homology, $\mathbb H_0(M)$ is defined to be the number of connected components of M.
\\
The first homology, $\mathbb H_1(M)$ is related to how many distinct circles (non-trivial) can one embed in the manifold M. Based on the number and the relationship between them, a group with integer coefficients are associated to it.
\\
Similarly one can also define $\mathbb H_2(M)$ by relating how many spherical holes are present in the manifold.
\\
Betti numbers denote the rank rank of the homology group. e.g. $\beta_0 = \mathrm{rank}(H_0(M)), \beta_1 = \mathrm{rank}(H_1(M)), \beta_2 = \mathrm{rank}(H_2(M))$
\end{definition}
\begin{definition}
    \textbf{Persistent Homology}: Suppose there are some data points in an ambient space. At each point, start with a ball of radius $\epsilon$. If two such balls intersect, add an edge. If there are three balls which intersect pairwise and together as well, draw triangular face. In this way, based on intersections, one builds a simplicial complex from data set by slowly increasing the radius $\epsilon$ of the ball around each data point. 
    \\
    At each radius value, the homology of the data set is computed. The homology which remains the same for the longest time, or in other words, persists, is considered the homology of the data set (manifold).
\end{definition}
\begin{definition}
    \textbf{Topological Complexity}: It's $\sum_{j=1}^d\beta_j$ for a d-dimension manifold. 
\end{definition}
\begin{definition}
    \textbf{Simplicial Complex}: Set composed of points, edges, faces, tetrahedrons and so on, such that all faces of these elements are in the complex as well as, if two elements intersect, they intersect at one face (corresponding to the object)
\end{definition}
\begin{definition}
    \textbf{Vietoris-Rips Complex}: It's the simplex formed in the definition of persistent homology for different $\epsilon$ radius values.
\end{definition}

\section{Examples}
Example of Homology:
\begin{figure}[h]
    \centering
    \includegraphics[width = 0.4\textwidth]{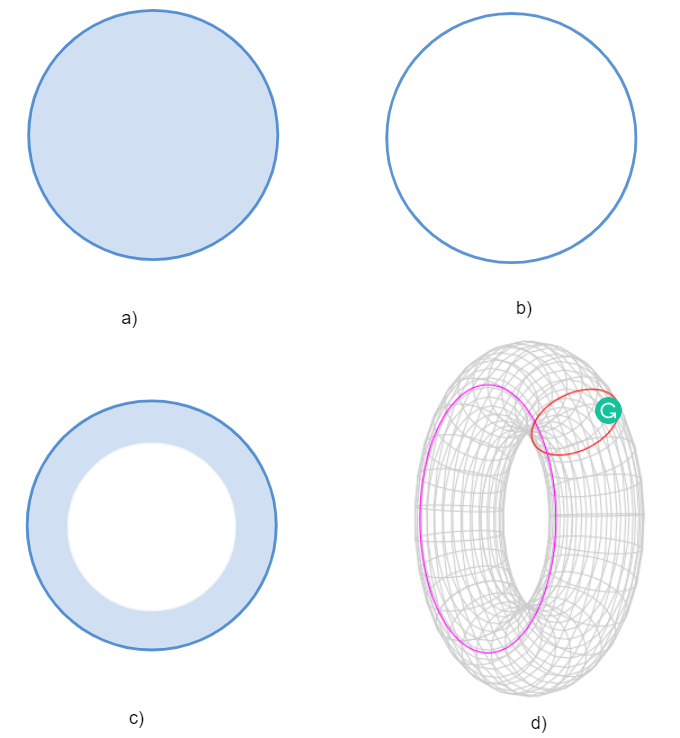}
    \caption{The images are : a) Disk b) Circle c) Annulus d) Torus
    \\
    $\mathbb H_0$ for each of the four parts is $\mathbb Z$ because there is one connected component.
    \\
    a) $\mathbb H_1(D^1) = {e}$ which mean trivial. Any small circle inside the disk can be contracted to a point while being inside the disk at all times.
    \\
    b) $\mathbb H_1(S^1) = \mathbb Z$ because there is one circle. 
    \\
    c) $\mathbb H_1(Annulus) = \mathbb Z$. Two types of circles are possible, one which goes around the hole and another one which is small, completely inside the blue region. The smaller one can shrink to a point, but the other one can not be. Therefore, it is $\mathbb Z$
    \\
    d) $\mathbb H_1 (\mathbb T^2) = \mathbb Z \times \mathbb Z$. There are two non-trivial distinct circles, the red one and the purple one. They can not be continuously deformed into one another without leaving the surface of torus.}
    \label{fig:my_label}
\end{figure}
\begin{figure}[h]
    \centering
    \includegraphics[width = 0.95 \textwidth]{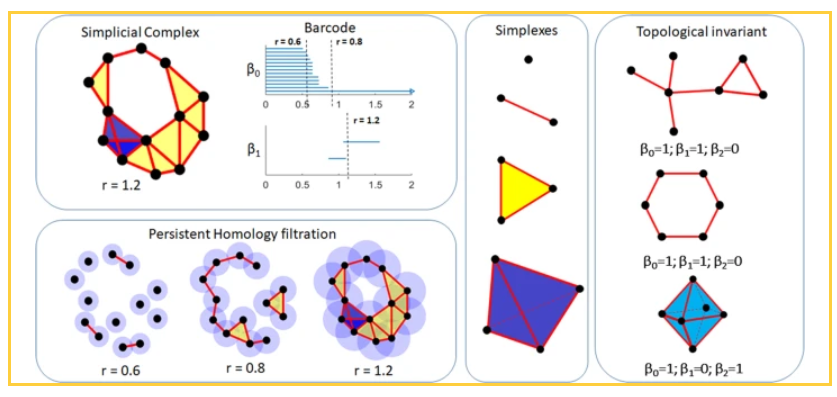}
    \caption{\cite{AMXM} Persistent homology filtration, persistent barcodes based on the radius and the homology of the simplicial complex, topological invariants}
    \label{fig:persistent homology}
\end{figure}
\begin{figure}[h]
    \centering
    \includegraphics[width = 0.5\textwidth]{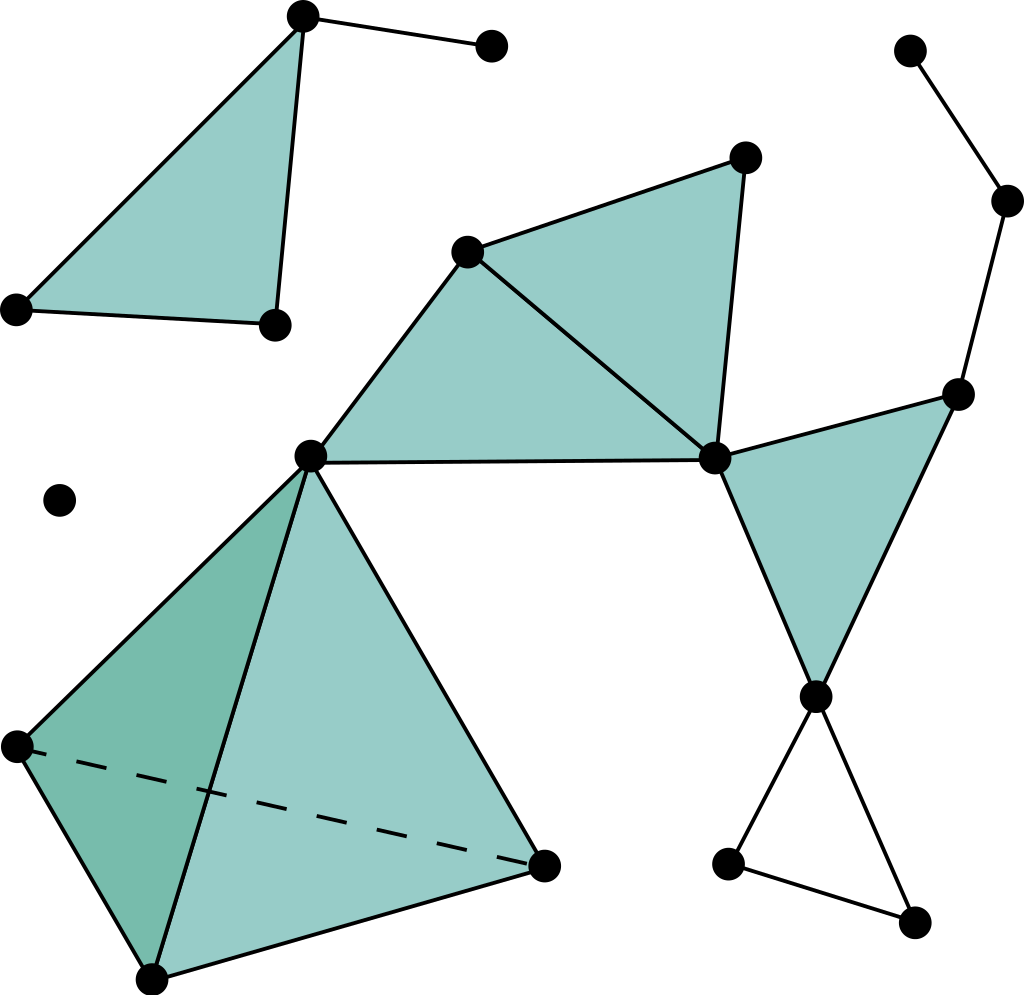}
    \caption{3D simplicial complex}
    \label{fig:simplicial_complex}
\end{figure}

\end{document}